\titlespacing*{\section}{14pt}{7pt}{4pt}
\titlespacing*{\subsection}{6pt}{3pt}{1pt}
\newcommand{\bxi}{\boldsymbol{\xi}}
\providecommand{\lin}[1]{\ensuremath{\left\langle #1 \right\rangle}}
  \providecommand{\R}{\mathbb{R}} 
  \providecommand{\E}{{\mathbb E}}
  \renewcommand{\epsilon}{\varepsilon}
  \providecommand{\1}{\mathbf{1}}
  \renewcommand{\gg}{\mathbf{g}}
  \providecommand{\pp}{\mathbf{p}}
  \providecommand{\xx}{\mathbf{x}}
  \providecommand{\yy}{\mathbf{y}}
  \providecommand{\cM}{\mathcal{M}}
  \providecommand{\cO}{\mathcal{O}}
  \providecommand{\cP}{\mathcal{P}}
\providecommand{\comment}[2]{\todo[inline,caption={}]{\textbf{#1: }#2}}%
\providecommand{\inlinecomment}[3]{%
  {\color{#1}#2: #3}}%
\newcommand\commenter[2]%
\newcommand\csname i#1\endcsname[1]{\inlinecomment{#2}{#1}{##1}}
\newcommand\csname #1\endcsname[1]{\comment{#1}{##1}}
\newtheorem{lemma}{Lemma}
\newtheorem{remark}{Remark}
\newtheorem{assumption}{Assumption}
\newtheorem{theorem}[lemma]{Theorem}
\DeclarePairedDelimiterX{\inp}[2]{\langle}{\rangle}{#1, #2}
\DeclarePairedDelimiterX{\abs}[1]{\lvert}{\rvert}{#1}
\DeclarePairedDelimiterX{\norm}[1]{\lVert}{\rVert}{#1}
\DeclarePairedDelimiterX{\cbr}[1]{\{}{\}}{#1} 
\DeclarePairedDelimiterX{\rbr}[1]{(}{)}{#1} 
\DeclarePairedDelimiterX{\sbr}[1]{[}{]}{#1} 
\definecolor{mydarkblue}{rgb}{0,0.08,0.45}
\providecommand{\Msuper}{\cM_{\text{super}}}
\providecommand{\Mcore}{\cM_{\text{core}}}
\let\theoldfootnote\thefootnote
\renewcommand{\thefootnote}{\fnsymbol{footnote}}
\renewcommand{\cite}{\citep}
\begin{document}

\twocolumn[

\aistatstitle{Masked Training of Neural Networks with Partial Gradients} 

\aistatsauthor{Amirkeivan Mohtashami
\And
Martin Jaggi
\And
Sebastian U. Stich}

\aistatsaddress{ EPFL \And EPFL \And CISPA\footnotemark[2] } ]

\begin{abstract}

	State-of-the-art training algorithms for deep learning models are based on stochastic gradient descent (SGD).
    Recently, many variations have been explored:\ perturbing parameters for better accuracy (such as in Extragradient), limiting SGD updates to a subset of parameters for increased efficiency (such as meProp) or a combination of both (such as Dropout). However, the convergence of these methods is often not studied in theory. \\
	We propose a unified theoretical framework to study 
	such SGD  variants---encompassing the aforementioned algorithms and additionally a broad variety of methods used for communication efficient training or model compression. 
	Our insights can be used as a guide to improve the efficiency of such methods and facilitate generalization to new applications.
	As an example, we tackle the task of jointly training networks, a version of which (limited to sub-networks)
	 is used to create Slimmable Networks. By training a low-rank Transformer jointly with a standard one we obtain superior performance than when it is trained separately. 
\end{abstract}

\footnotetext[2]{CISPA Helmholz Center for Information Security. Most research carried out while at EPFL.}
\renewcommand{\thefootnote}{\theoldfootnote}

\section{Introduction}
Deep learning models are being used successfully in an increasing variety of scenarios. Consequently, new variants of SGD---the traditional training method for neural networks---have been designed, encouraged by various goals such as improving training efficiency~\cite{sun2017meprop}, improving final model's accuracy~\cite{korpelevich1976extragradient}, or imposing additional properties on the trained model \citep[e.g.\ being low rank][]{wang2021pufferfish}.
In distributed training, compressed versions of gradients are used in order to reduce communication costs~\cite{alistarh2017qsgd}. Dropout \cite{srivastava2014dropout}, a method that computes the gradient after deactivating a random subset of neurons in the network, is widely used and is known to yield models with better performance.
The Slimmable Nets approach \cite{yu2018slimmable, yu_universally_2019} allows adjusting the \emph{width} of a neural network dynamically during training and inference. While extensive literature exists on theoretical convergence bounds of SGD~\cite{Bottou2018:book,stich2019unified,Stich2021:critical},  variants of SGD, such as those we discussed earlier, often lack a theoretical analysis even though they perform well in practice. 
However, theoretical understanding of a method can significantly ease and guide its adaptation to different tasks as well as its combination with other methods.

\begin{figure*}[t]
	\centering
\includegraphics[width=.75\linewidth]{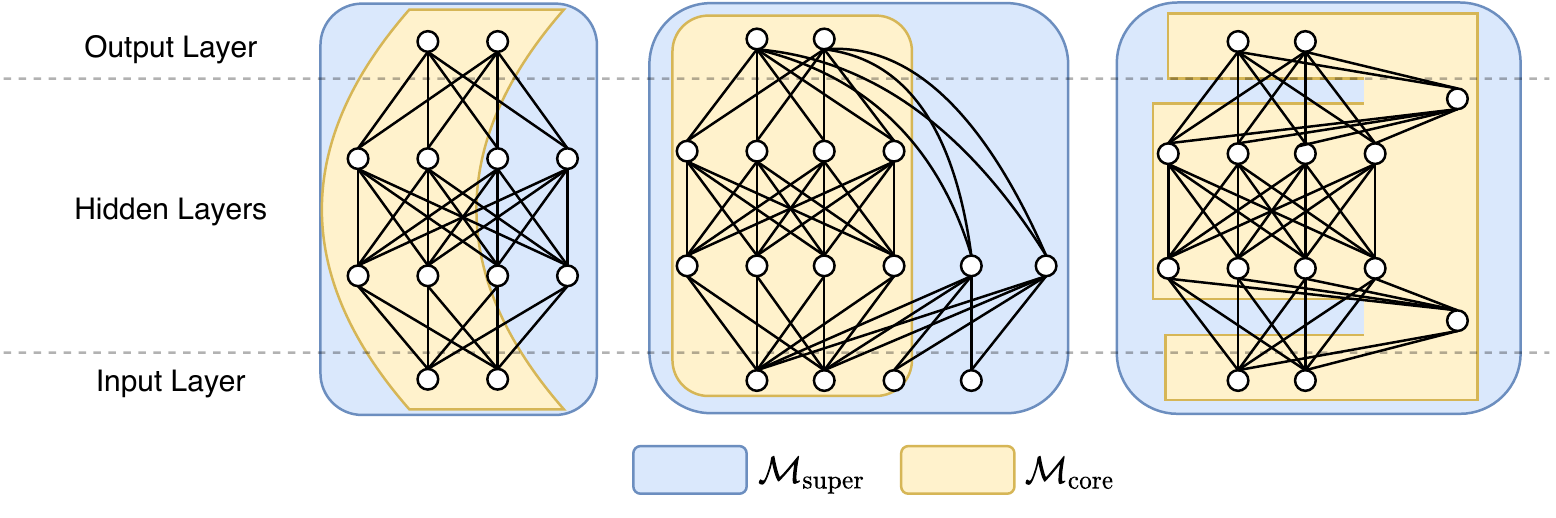}
\caption{Illustrations of relevant compositions of a core network $\Mcore$ ({\color{orange}yellow}) with a larger network $\Msuper$~({\color{blue}blue}).
 The left figure corresponds to the \emph{slimmable} case of $\Mcore$ being a small-width sub-network of $\Msuper$. The middle example additively combines the output of two networks of different shape, matching \emph{wide-and-deep networks} \cite{cheng2016wide}. The right architecture shows the a more complex scheme where several fully-connected layers are replaced by their low-rank variant in $\Mcore$, 
  which are combined in $\Msuper$ by adding the outputs layer-wise, while sharing the weights for the remaining layers.}\vspace{-2mm}
\label{fig:super-network-illustration}
\end{figure*}

In this paper, we propose a unified theoretical framework to study variants of SGD that rely on a combination of parameter perturbation and gradient masking. In particular, we theoretically analyze convergence of these algorithms under a generic template called partial SGD and identify a set of criteria that when satisfied yield a theoretical convergence guarantee.

Our result applies to a wide range of existing methods such as the Extragradient method \cite{korpelevich1976extragradient}, Dropout \cite{srivastava2014dropout}, extensions of Slimmable Nets \cite{yu2018slimmable}, partial backpropagation methods such as meProp~\cite{sun2017meprop} and parameter freezing methods such as layer-wise training~\cite{bengio2007greedy}, wide-and-deep networks \cite{cheng2016wide}, weight pruning \cite{lin2019dynamic}, as well as distributed independent subnet training~\cite{yuan2019distributed}, among others. We evaluate some of these methods based on our criteria and obtain rigorous convergence rates as well as insights on how these algorithms might be further improved.

More importantly, our results can also be used as a guideline to create new training methods (e.g.\ to impose special properties on the final model) since design choices can guarantee satisfaction of said criteria. 
As an example, we propose a training scheme to address the fundamental task of simultaneous training of two networks---a core network together with a larger architecture containing it.  This includes the case of joint training of arbitrary combinations formed by two different neural networks, in which case the larger network is the union of the two. Figure~\ref{fig:super-network-illustration} illustrates several example cases. 
Interestingly our designed method 
 is an unrolled version of slimmable training. However, whereas slimmable training was limited to training sub-networks with reduced width we apply our method to more generalized scenarios. Joint training can have various motivations including but not limited to allowing weight sharing across different networks, improving training speed, or improving final accuracy.
To demonstrate the practical effectiveness of our method, we train a low-rank core network for a Transformer architecture \cite{vaswani2017attention} together with the larger original architecture.
As a result of joint training, the trained low-rank model has superior stand-alone performance than when it is trained alone. The gain is particularly pronounced for significant model size reductions (32x rank reduction) where computational cost savings are highest (17.95 vs.\ increase to 24.03 BLEU score). 
Furthermore, the larger architecture can be converted to a full-rank standard Transformer matching the performance of the highly-optimized original Transformer model.

\noindent\textbf{Contributions} Our main contributions can be summarized as follows: 
\begin{itemize}[nosep,leftmargin=12pt,itemsep=1pt]
	\item We provide a rigorous theoretical analysis of partial SGD---a generic template encompassing  training schemes that involve a combination of parameter perturbation %
	and gradient masking.
	\item We present and discuss a host of existing methods that fit our theoretical framework.
Our detailed discussion illustrates how our insights can guide future design decisions or lead to improvements for existing methods %
 (showcased by designing a training method that jointly trains a full model and a reduced capacity model of much smaller size).
	 \item We generalize slimmable training to go beyond just reducing width but allowing arbitrary sub-networks, 
	  and show the effectiveness of joint training in various practical scenarios, including training Transformer models together with low-rank variants on NLP tasks.
\end{itemize}

\section{Related Work}

The theoretical framework developed in this work covers a broad set of training methods used in a wide variety of settings. In distributed training, a worker can be responsible for optimizing a subnetwork while the local updates are expected to contribute to optimization of the full network~\cite{yuan2019distributed}. Even in the data-parallel setting where each worker optimizes the full model but on its own local dataset, a compressed (masked) version of gradient is often used for communication 	efficiency~\cite{dryden2016communication,aji2017sparse}. A similar technique is applied in  \cite{sun2017meprop} to improve speed of single-node training. Moreover, certain parameter perturbations seem to be effective in improving a model's ability to generalize, such as in extragradient methods \cite{korpelevich1976extragradient}. A combination of masking and parameter perturbation can be seen in Dropout~\cite{srivastava2014dropout}, a widely used technique which also fits our framework. 

In this work, we show convergence bounds for algorithms that conform to our partial SGD template in smooth non-convex settings which is widely used in the SGD convergence literature \cite{Bottou2018:book,Arjevani2018:delayed}. The convergence of such algorithms is also investigated under NTK assumptions~\cite{jacot2021neural} for shallow neural networks \cite{liao2021convergence} but only for random masking applied for both gradient computation and masking. In contrast, we allow arbitrary parameter perturbations for gradient computation, as well as arbitrary gradient masking which might not be random nor necessarily corresponding to said perturbation. 
Other works investigate convergence of instances of our generic template, such as Dropout \cite{senen2020asymptotic,senen2020almost,jacot2021neural}, or gradient compression \cite{stich2020error,stich2018sparsified,alistarh2018convergence}.

Our framework can also be used to guide the design of new methods. We illustrate this by designing a method for joint training. The resulting method resembles an unrolled version of Slimmable training \cite{yu2018slimmable, yu_universally_2019}. Slimmable networks are trained such that the width can be chosen dynamically based on available resources during inference. Dynamic sparsity neural networks \cite{wu2021dynamic} extend this paradigm to subnetworks with different levels of sparsity, allowing removal of individual weights instead of neurons. In separate work simultaneous to ours,  \citet{peste2021ac} apply a similar method, also used in \cite{jin2016training}, %
to train sparse models. They theoretically prove convergence in the case when sparse minimizers exist, such as in overparameterized networks. While we only perturb the parameters for computing the gradient vector and apply the updates on the original values, their method applies the perturbation permanently; zeroing the masked parameters whereas we keep them unchanged. Moreover, our setting is more general since we allow perturbations other than masking.

An example of new applications considered in our work is training a network jointly with its low-rank variant. Previous work in this area includes adding regularization to the objective loss to encourage low-rank learning \cite{wen_coordinating_2017}. Another approach is used in \cite{wang2021pufferfish} to train the low-rank variant by initially using the full network to jump-start training. In contrast, our method continuously trains both the full network and the low-rank variant without any changes to the loss. 

It is also possible to obtain a good small network from a larger network using methods such as knowledge distillation \cite{hinton2015distilling,linS2020distillFL} or model pruning \cite{han2015learning,zhu2017prune,lin2019dynamic}. Distillation improves the training of a student network (here the small network) by using the outputs of a teacher network (here the larger network). However, this requires training the small and large networks separately, which is more costly than our method. On the other hand, model pruning methods allow extracting a small network with good performance from a large network. While these methods can be fast and sometimes remove the need for training the small network, it is usually not possible to impose a structure on the small network.  In contrast, our method allows a specifying a precise pre-defined architecture as the small network.  A different line of work, looks for lottery ticket sub-networks in a normally trained network that can be used separately or trained from scratch to obtain the same performance \cite{frankle2018lottery,frankle2019stabilizing}. However, these sub-networks often lack a hardware-compatible structure that can hinder obtaining a performance boost. In contrast, joint training can intuitively be seen as shaping the lottery ticket. 

Previous work successfully use joint training for neural architecture search where multiple networks are trained at once and the right architecture is determined by comparing their performance~\cite{cai2019once,yu2019autoslim,yu2019evaluating,yu2020bignas}. As an example of fusing two network together, Wide-and-deep networks \cite{cheng2016wide} consist of a wide and another deep but narrow network with the output being the sum of the two networks. The wide part is a linear model that e.g.\ uses carefully selected features as well as their cross-product as input, at a very low inference latency cost. On the other hand, the deep part works on the raw features which allows the wide part to memorize more complex patterns of interactions between features %
while the deep part can help generalization~\cite{cheng2016wide}.  %
\citet{guo2017deepfm} extend these networks by removing the need for manual feature engineering. %
\looseness=-1

Finally, in another line of research, the implicit bias caused by applying these methods is investigated \cite{hooker2020characterising}. We leave further research of this aspect of our method to future work. \looseness=-1

\looseness=-1

\begin{figure}
	\begin{algorithm}[H]
		\caption{\textsc{partial SGD}%
		} 
		\label{alg:partialsgd}
		\resizebox{\linewidth}{!}{
			\begin{minipage}{1.1\linewidth}
				\begin{algorithmic}[1]
					\For{$t=0,\dots,T$}
					\State Choose a mask $\pp_t$, and
					\State a perturbation $\delta\xx_t$ for the current step $t$
					\State $\tilde{\xx}_t \gets \xx_t + \delta\xx_t$ 
					\Comment perturb parameters
					\State Take a stochastic gradient $\gg_t$ of $\nabla f(\tilde{\xx}_t)$
					\State  $\xx_{t + 1} \gets \xx_t - \gamma_t \pp_t \odot \gg_t $ 
					\EndFor
				\end{algorithmic}
		\end{minipage}}%
	\end{algorithm}%
	\vspace{-2em}
\end{figure}

\section{Convergence Analysis}
\label{sec:theory}
We consider the optimization of the full network parameters $\xx$. In formal terms, we consider finding the minima of the empirical loss $f \colon  \R^{d} \to \R$ :\vspace{-1mm}
\begin{equation} \textstyle
	f^\star := \min_{\xx \in  \R^{d}} f(\xx) \,.
\end{equation}

\subsection{Partial SGD}

We start by introducing partial SGD (Algorithm~\ref{alg:partialsgd}), a generic template designed to fit various training algorithms that apply a combination of gradient masking, and parameter perturbation for gradient computation. In particular, we make the following changes compared to vanilla SGD:

\begin{enumerate}[leftmargin=16pt,nosep,itemsep=1pt]
	\item We allow a time-dependent binary mask $\pp_t$ in each step to be chosen arbitrarily from a set of binary masks $\cP \subseteq \{0, 1\}^d$. Selecting the all-one vector in every step recovers vanilla SGD. Another example is a method that picks a random mask each time.
	\item We allow arbitrary perturbation $\delta \xx_t$ of the current parameters for calculating the loss and its gradient. Naturally, setting $\delta \xx_t = 0$ recovers vanilla SGD while setting $\delta \xx_t = -(\1 - \pp_t) \odot \xx_t$ would mean applying the gradient of the sub-network induced by the mask $\pp_t$. 
\end{enumerate}
Partial SGD can represent various training schemes.
As an example consider the case of applying dropout with probability~$\mu$ over the network. In this case $\pp_t$ is a random mask where each element (or each group of elements if the dropout is applied to neurons instead of individual weights) is one with probability~$\mu$. The perturbation $\delta \xx_t$ should be set equal to $-(\1 - \pp_t) \odot \xx_t$ (similar to alternating training scheme) to compensate for the masking of weights during forward propagation while $\pp_t$ limits the parameter update during backward propagation.

\subsection{Assumptions \& Definitions}\label{sec:assumptions}
We focus on the non-convex setting and assume the function $f$ to be $L$-smooth. 
\begin{assumption}[$L$-smoothness]
	\label{ass:lsmooth}
	The function $f \colon \R^d \to \R$ is differentiable and there exists a constant $L > 0$  such that:\vspace{-1mm}
	\begin{align}
		\norm{\nabla f(\xx)- \nabla f(\yy)} &\leq L \norm{\xx - \yy}\,, \qquad \forall \xx,\yy \in \R^{d} \,. \label{eq:lsmoothgrad}
	\end{align}
\end{assumption}

We assume that for every point in $\xx \in \R^d$ we can query a stochastic gradient $\gg(\xx)$ of $f(\xx)$, that is
\begin{align}
 \gg(\xx) := \nabla f(\xx) + \bxi(\xx) \,, \label{eq:goracle}
\end{align}
for a zero-mean stochastic noise vector $\bxi \colon \R^d \to R^d$.

We assume that the noise is bounded. Since we are interested in masking the gradient for partial training, we use a modified assumption over the set of allowed masks $\cP \subseteq \{0, 1\}^d$.
\begin{assumption}[$(M,\sigma^2)$-bounded noise over $\cP$]\label{ass:noise}
There exist parameters $M \geq 0, \sigma^2 \geq 0$ such that for any mask $\pp \in \cP$: \vspace{-2mm}
\begin{align*}
 \E{ \norm{\pp  \odot \bxi (\xx)}^2 }  &\leq M\norm{\pp \odot \nabla f(\xx)}^2 + \sigma^2\,, \qquad \forall \xx \in \R^{d}\,.
\end{align*}
\end{assumption}
\begin{remark}
When $\cP = \{\1\}$, the assumption becomes the same as the standard assumption used for analyzing SGD convergence in previous work \citep[e.g.][]{Bottou2018:book}. When the size of $\cP$ is larger, the assumption becomes stronger. For example, if $\cP = \{\1, \1_{\Mcore}\}$, where $\Mcore$ is a sub-network, the norm of noise over the sub-network is separately bounded by the norm of the gradient of the sub-network parameters. In its strongest form, i.e.\ when~$\cP$ contains all the possible masks, the assumption implies that the noise over each parameter is separately bounded by the norm of that parameter's gradient which is still a reasonable assumption.  \looseness=-1
\end{remark}

\subsection{Main Result}

We state the following main convergence theorem for non-convex objectives:

\begin{theorem}\label{thm:main}
	\label{thm:fullgrad}
Let Assumptions~\ref{ass:lsmooth}--\ref{ass:noise}, 
hold,
and let the stepsize $\gamma_t = \alpha_t \gamma_{base}$ in Algorithm~\ref{alg:partialsgd} with
$\gamma_{base}= \min\big\{\frac{1}{L(M+1)}, 
\frac{\epsilon}{L\sigma^2}
 \big\}$ for any $\epsilon > 0$ and 
 with $\alpha_t = \min\{1, \frac{\lin{\pp_t \odot \nabla f(\xx_t), \pp_t \odot \nabla f(\tilde{\xx}_t)}}{\norm{\pp_t \odot \nabla f(\tilde{\xx}_t)}^2}\}$, where $\pp_t$ denotes the binary mask and $\tilde{\xx}_t$ the perturbed parameters. Then,
\begin{itemize}[nosep,itemsep=1pt,leftmargin=12pt]
	\item
$\frac{1}{T}\sum_{t = 0}^{T - 1} \E_{\bxi} \alpha_t^2\norm{\pp_t \odot \nabla f(\tilde{\xx}_t)}^2 < \epsilon$, after at most the following number of iterations $T$:
\begin{align*}
\cO \left(\frac{\sigma^2}{\epsilon^2}  + \frac{(M + 1)}{\epsilon} \right) \cdot L F_0 \,.
\end{align*}
\item
Let $q := \max_{t \in [T], \bxi} \Big(q_t := 
\frac{\norm{\nabla f(\xx_t)}}{\norm{\pp_t \odot \nabla f(\xx_t)}} \cdot
\max\big\{\frac{\norm{\pp_t \odot \nabla f(\xx_t)}}{\norm{\pp_t \odot\nabla f(\tilde{\xx}_t)}}, \frac{\norm{\pp_t \odot \nabla f(\xx_t)}\norm{\pp_t \odot\nabla f(\tilde{\xx}_t)}}{\lin{\pp_t \odot \nabla f(\xx_t), \pp_t \odot \nabla f(\tilde{\xx}_t)}}\big\}\Big)$. Then  $\frac{1}{T}\sum_{t = 0}^{T - 1} \E_{\bxi}\norm{\nabla f(\xx_t)}^2 < \epsilon$ after at most the following number of iterations $T$:
\begin{align*}
	\cO \left(\frac{q^4\sigma^2}{\epsilon^2}  + \frac{q^2(M + 1)}{\epsilon} \right) \cdot L F_0 \,,
\end{align*}
\end{itemize}\vspace{-3mm}
where  $F_0:= f(\xx_0) - f^\star$ and we use $\bxi$ to denote the full sequence $\bxi(\xx_0), \bxi(\xx_1), \ldots, \bxi(\xx_{T - 1})$.
\end{theorem}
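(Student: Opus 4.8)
The plan is to run the textbook descent-lemma argument for nonconvex SGD, with the adaptive scalar $\alpha_t$ absorbing the mismatch between the point $\xx_t$ at which progress is measured and the perturbed point $\tilde{\xx}_t$ at which the gradient is evaluated. First I would apply $L$-smoothness (Assumption~\ref{ass:lsmooth}) along the step $\xx_{t+1}-\xx_t=-\gamma_t\,\pp_t\odot\gg_t$ and take expectation over the fresh noise $\bxi(\tilde{\xx}_t)$ conditionally on the history. Note that $\pp_t$, $\delta\xx_t$, hence $\tilde{\xx}_t$, and hence $\alpha_t$ (which involves only \emph{true} gradients) and $\gamma_t=\alpha_t\gamma_{base}$ are all measurable w.r.t.\ that history, while $\E_{\bxi}[\gg_t]=\nabla f(\tilde{\xx}_t)$. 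Writing $a_t:=\lin{\pp_t\odot\nabla f(\xx_t),\pp_t\odot\nabla f(\tilde{\xx}_t)}$ and $b_t:=\norm{\pp_t\odot\nabla f(\tilde{\xx}_t)}^2$, using that $\pp_t$ is binary (so the cross term equals $-\gamma_t a_t$) and Assumption~\ref{ass:noise} at the point $\tilde{\xx}_t$ for the second-moment term, this yields
\begin{equation*}
\E_{\bxi} f(\xx_{t+1}) \le f(\xx_t) - \gamma_{base}\alpha_t a_t + \tfrac{L\gamma_{base}^2\alpha_t^2}{2}\bigl((M+1)b_t + \sigma^2\bigr)\,.
\end{equation*}

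Next I would exploit the single elementary fact the clip buys us: $\alpha_t^2 b_t \le \alpha_t a_t$ (equality when $\alpha_t=a_t/b_t$; reducing to $b_t\le a_t$ when $\alpha_t=1$). Using it, together with $\gamma_{base}\le\tfrac1{L(M+1)}$, to bound the $(M+1)b_t$ term by $\tfrac12\gamma_{base}\alpha_t a_t$; then using the same inequality to replace $-\tfrac12\gamma_{base}\alpha_t a_t$ by the smaller $-\tfrac12\gamma_{base}\alpha_t^2 b_t$; and finally using $\alpha_t\le1$ and $\gamma_{base}\le\epsilon/(L\sigma^2)$ for the $\sigma^2$ term, collapses everything to the clean recursion $\E_{\bxi} f(\xx_{t+1}) \le f(\xx_t)-\tfrac{\gamma_{base}}{2}\alpha_t^2\norm{\pp_t\odot\nabla f(\tilde{\xx}_t)}^2+\tfrac{\gamma_{base}\epsilon}{2}$. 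Taking total expectations, summing over $t=0,\dots,T-1$, telescoping, and using $f(\xx_T)\ge f^\star$ gives $\frac1T\sum_t\E_{\bxi}\alpha_t^2\norm{\pp_t\odot\nabla f(\tilde{\xx}_t)}^2 \le \frac{2F_0}{\gamma_{base}T}+\epsilon$. Since $1/\gamma_{base}\le L(M+1)+L\sigma^2/\epsilon$, the first term drops below $\epsilon$ once $T=\cO\!\bigl(\frac{\sigma^2}{\epsilon^2}+\frac{M+1}{\epsilon}\bigr)LF_0$, which is the first bullet (absorbing a constant factor of $2$ into $\gamma_{base}$ and $T$ if one insists on the strict inequality as stated).

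For the second bullet I would establish the pointwise bound $\norm{\nabla f(\xx_t)}\le q_t\,\alpha_t\norm{\pp_t\odot\nabla f(\tilde{\xx}_t)}$ with $q_t$ as in the statement, by splitting on the two branches of $\alpha_t=\min\{1,a_t/b_t\}$. If $\alpha_t=1$, then $\alpha_t\norm{\pp_t\odot\nabla f(\tilde{\xx}_t)}=\sqrt{b_t}$ and $q_t$ is already at least its first $\max$-term, $\norm{\nabla f(\xx_t)}/\sqrt{b_t}$; if $\alpha_t=a_t/b_t$ (which, with $q<\infty$, forces $a_t>0$), then $\alpha_t\norm{\pp_t\odot\nabla f(\tilde{\xx}_t)}=a_t/\sqrt{b_t}$ and $q_t$ is at least its second $\max$-term, $\norm{\nabla f(\xx_t)}\sqrt{b_t}/a_t$ — either way the claimed bound follows. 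Squaring, bounding $q_t\le q$ uniformly over $t\le T$ and over the noise, and dividing gives $\frac1T\sum_t\E_{\bxi}\norm{\nabla f(\xx_t)}^2\le q^2\cdot\frac1T\sum_t\E_{\bxi}\alpha_t^2\norm{\pp_t\odot\nabla f(\tilde{\xx}_t)}^2$; invoking the first bullet at target accuracy $\epsilon/q^2$ (legitimate since it holds for every positive accuracy, i.e.\ with $\gamma_{base}$ tuned to $\epsilon/q^2$) produces $T=\cO\!\bigl(\frac{q^4\sigma^2}{\epsilon^2}+\frac{q^2(M+1)}{\epsilon}\bigr)LF_0$.

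The one-step descent and the telescoping are routine; the real content I expect to wrestle with is (i) seeing \emph{why} the particular clip $\alpha_t=\min\{1,a_t/b_t\}$ is the correct choice — it is exactly what forces the cross term to dominate the quadratic term with \emph{no} a priori relation assumed between $\nabla f(\tilde{\xx}_t)$ and $\nabla f(\xx_t)$, that relation instead being deferred into the factor $q$ in the final rate — and (ii) reverse-engineering the somewhat baroque definition of $q_t$ so that each branch of the clip is covered in the last step. A little care is also needed for degenerate realizations where $\norm{\pp_t\odot\nabla f(\tilde{\xx}_t)}=0$ or where the correlation $a_t$ between the true and perturbed masked gradients is non-positive (fixing $\alpha_t$ by convention in the former; the latter being effectively excluded by requiring $q<\infty$, which is a genuine assumption hidden in the second bullet).
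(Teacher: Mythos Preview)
Your proposal is correct and follows essentially the same route as the paper: a one-step descent via $L$-smoothness and the noise bound, the key inequality $\alpha_t b_t \le a_t$ (equivalently your $\alpha_t^2 b_t\le\alpha_t a_t$) to collapse the cross and quadratic terms, telescoping to get the first bullet, and then the identity $q_t=\norm{\nabla f(\xx_t)}/(\alpha_t\norm{\pp_t\odot\nabla f(\tilde\xx_t)})$ plus the substitution $\epsilon\to\epsilon/q^2$ for the second. Your explicit two-branch verification of that $q_t$ identity and your remarks on the degenerate cases $b_t=0$ and $a_t\le 0$ are a bit more careful than the paper, which simply asserts the identity; the factor-of-two bookkeeping you flag is also present there (the proof uses $\gamma_{base}\le\epsilon/(2L\sigma^2)$ while the statement has $\epsilon/(L\sigma^2)$).
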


We highlight that the bounds proven by this theorem depend on the values of the stepsize scaling $\alpha_t$ and gradient alignment $q_t$. In particular, if the values of $\alpha_t$ are sufficiently large,  the first part of the theorem shows convergence of the partially masked core networks. Moreover, sufficiently small~$q$ ensures  convergence of the  full network. Note that these two values are inversely proportional ($q_t =\frac{\norm{\nabla f(\xx_t)}}{\alpha_t\norm{\pp_t \odot \nabla f(\tilde{\xx}_t)}}$) so that usually a large $\alpha_t$ results in a small $q_t$ and vice-versa. 

In reality, %
 the values $q_t$ and $\alpha_t$ also depend on the selection of masks which vary depending on training scheme and application.
The above theorem holds under the most general set of assumptions for widest applicability. More informative upper bounds on the value $q$ can be obtained in relevant applications. Note that Theorem~\ref{thm:main} naturally recovers the standard SGD convergence theorem~\cite{Bottou2018:book} as a special case, by setting $\pp_t = \1_d$. In this case $\alpha_t = q_t = 1$ and we recover the best known SGD bounds.

We also remark that while the theoretical result assumes that the learning rate is multiplied by $\alpha_t$, in practice it is not necessary to compute an exact value for $\alpha_t$ in each step. Instead, a lower bound $\alpha$ can be tuned as a hyper-parameter (which might also be different for different masks). Even simpler, in our experiments, we observed that no tuning is necessary, and using the same learning rate as for the standard training of the full network performs very well.

 We leave the proof of the theorem (leveraging techniques from the proof of BiasedSGD \cite{ajalloeian2020analysis}) to Appendix~\ref{app:theorem-proof}. In the following we propose a set of criteria that can be used to ensure the value $q$ for a training algorithm is small. As such they can be used both as a guide for developing new methods as well as proving convergence of existing methods.
 
 \newtheorem{criteria}{Criterion}

\begin{criteria}[Gradient norm preservation against masking]\label{crit:norm-masking} To ensure a small $q$ value, the binary masks applied on the gradient vector should not drastically reduce its norm, i.e.\ the ratio  $\frac{\norm{\nabla f(\xx_t)}}{\norm{\pp_t \odot \nabla f(\xx_t)}} \leq c_{\text{\normalfont norm}}$ should be bounded by a small constant $c_{\text{\normalfont norm}}$.  Note that this criterion is automatically satisfied with $c_{\text{\normalfont norm}}=1$ when no masking is done. 
	\end{criteria}

\begin{criteria}[Gradient norm preservation against perturbation]\label{crit:norm-perturbation} When applying perturbations before computing the gradient, the training algorithm should ensure that the perturbation does not lead to a drastic decrease in gradient norm, i.e., there should exist a constant $c_{\text{\normalfont sim}}$ such that $ \frac{\norm{\pp_t \odot \nabla f(\xx_t)}}{\norm{\pp_t \odot\nabla f(\tilde{\xx}_t)}} \leq c_{\text{\normalfont sim}} $. This  holds with $c_{\text{\normalfont sim}}=1$ when no perturbation is done.
\end{criteria}

\begin{criteria}[Gradient alignment]\label{crit:gradient-alignment} The gradient vectors computed with and without perturbation should be aligned. More formally, there should exist an upper bound $c_{\text{\normalfont align}}$ such that $ \frac{\norm{\pp_t \odot \nabla f(\xx_t)}\norm{\pp_t \odot\nabla f(\tilde{\xx}_t)}}{\lin{\pp_t \odot \nabla f(\xx_t), \pp_t \odot \nabla f(\tilde{\xx}_t)}} \leq c_{\text{\normalfont align}}$. This criterion also automatically holds with $c_{\text{\normalfont align}} =1$ when no perturbation is applied, i.e.\ when $\xx_t = \tilde \xx_t$.
\end{criteria}

We will now example various use-cases of these criteria.

\paragraph{meProp} meProp is a training scheme which only uses the top-$k$ elements of the gradient vector for optimization \cite{sun2017meprop}. Since no perturbation is applied, Criteria~\ref{crit:norm-perturbation} and \ref{crit:gradient-alignment} are satisfied. Choosing elements with the highest value in the gradient vector ensures Criterion~\ref{crit:norm-masking} as the upper bound $c_{\text{norm}} \leq \frac{d}{k}$ 
holds.  Note that if instead of proving the existing method we were looking for a method which only uses a portion of the gradient vector, this criterion would guide us to choose the elements with the highest value, guiding us toward the right method.  For completeness, we point out that, in practice,  meProp applies the top-$k$ approximation layer-wise during the backward pass. This speeds up the back-propagation process but leads to using an incorrect gradient in the lower layers, thus only being an approximation of our variant using the actual gradient.

\paragraph{Extragradient Method} In this method, the gradient is calculated after making a small step (with step size $\eta_s$) in the direction of the gradient, i.e.\ $\delta \xx_t = \eta_{s} \nabla f(\xx_t)$. Setting $\pp_t = \1$, recovers the Extragradient method from Algorithm~\ref{alg:partialsgd} which is a more generalized version that allows gradient masking as well as  parameter perturbations that are not in the direction of the gradient. Since no masking is done, criterion \ref{crit:norm-masking} is automatically satisfied. In order to show the other criteria are also satisfied, we introduce the following assumption and lemma.

\begin{assumption}[bounded perturbation]\label{ass:perturbation}
	For a $L$-smooth function, i.e.\ satisfying Assumption~\ref{ass:lsmooth}, the training algorithm makes perturbations such that      %
	\begin{align}
		\label{eq:perturbation}
		\max_{t \in [T]} \frac{\norm{\delta\xx_t}}{\max\big\{\norm{\pp_t \odot\nabla f(\xx_t)}, \norm{\pp_t \odot\nabla f(\tilde{\xx}_t)}\big\}} < \frac{1}{2L} \,.
	\end{align}
\end{assumption}

Note that by choosing a small perturbation coefficient, in particular $\eta_s \leq \frac{1}{2L}$,
 the assumption holds for the extragradient method since $\norm{\delta \xx_t} = \eta_s \norm{\nabla f(\xx_t)}$. The following lemma guarantees the other criteria.

\begin{lemma}
	\label{lemma:bounded-perturbation}
	If Assumptions~\ref{ass:lsmooth} and \ref{ass:perturbation} hold, $c_{\text{align}} \leq \sqrt{10}$ and $c_{\text{sim}} \leq \frac{\sqrt{10}}{2}$.
\end{lemma}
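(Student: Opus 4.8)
The plan is to reduce both claimed inequalities to a single geometric estimate on the two masked gradients $\aa := \pp_t \odot \nabla f(\xx_t)$ and $\bb := \pp_t \odot \nabla f(\tilde\xx_t)$: namely that $\norm{\aa - \bb}$ is strictly less than half of $\max\{\norm{\aa}, \norm{\bb}\}$. Once that is in hand, everything else is elementary manipulation of norms and inner products, with a short case split.

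First I would establish this core estimate. Since $\pp_t \in \{0,1\}^d$, the map $\vv \mapsto \pp_t \odot \vv$ is a coordinate projection, hence non-expansive, so $\norm{\aa - \bb} = \norm{\pp_t \odot (\nabla f(\xx_t) - \nabla f(\tilde\xx_t))} \le \norm{\nabla f(\xx_t) - \nabla f(\tilde\xx_t)}$; by $L$-smoothness (Assumption~\ref{ass:lsmooth}) together with $\tilde\xx_t = \xx_t + \delta\xx_t$ this is at most $L\norm{\delta\xx_t}$. Invoking Assumption~\ref{ass:perturbation} then gives $\norm{\aa - \bb} \le L\norm{\delta\xx_t} < \tfrac12\max\{\norm{\aa}, \norm{\bb}\}$. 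Writing $m := \max\{\norm{\aa}, \norm{\bb}\}$ and applying the reverse triangle inequality, it follows that both $\norm{\aa}$ and $\norm{\bb}$ lie in the interval $(\tfrac12 m,\, m]$.

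For Criterion~\ref{crit:norm-perturbation} (the bound on $c_{\text{sim}}$), the only nontrivial case is $\norm{\aa} \ge \norm{\bb}$ (otherwise the ratio is $\le 1$); there the previous step gives $\norm{\bb} > \tfrac12\norm{\aa}$, and combining this with the triangle inequality $\norm{\aa} \le \norm{\bb} + \norm{\aa - \bb}$ and the bound on $\norm{\aa - \bb}$ produces the required upper bound on $\norm{\aa}/\norm{\bb}$. For Criterion~\ref{crit:gradient-alignment} (the bound on $c_{\text{align}}$) I would use the polarization identity $\lin{\aa, \bb} = \tfrac12(\norm{\aa}^2 + \norm{\bb}^2 - \norm{\aa - \bb}^2)$, lower-bound the right-hand side using the core estimate $\norm{\aa - \bb}^2 < \tfrac14 m^2$ and the fact that $\norm{\aa}, \norm{\bb} > \tfrac12 m$, and then divide by $\norm{\aa}\norm{\bb} \le m^2$ to get a positive constant lower bound on $\lin{\aa, \bb}/(\norm{\aa}\norm{\bb})$; its reciprocal bounds $c_{\text{align}}$. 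Since $c_{\text{align}}$ is symmetric in $\aa$ and $\bb$, no additional case split is needed here.

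I expect the only real work to be bookkeeping rather than structure: Assumption~\ref{ass:perturbation} normalizes by $\max\{\norm{\aa}, \norm{\bb}\}$ whereas the criteria are ratios, so one must carry the explicit constants carefully through the reverse-triangle and polarization steps to land on the stated values $\sqrt{10}$ and $\sqrt{10}/2$ (choosing the intermediate estimates loosely enough to keep the algebra clean). The only ingredients are that binary masking is a contraction, $L$-smoothness, and the perturbation bound, so there is no conceptual obstacle — the delicate point is purely the constant tracking in the case $\norm{\aa}\ge\norm{\bb}$.
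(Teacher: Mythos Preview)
Your plan is the paper's plan: both reduce everything to the single estimate $\norm{\aa-\bb} < \tfrac12 m$ with $m=\max\{\norm{\aa},\norm{\bb}\}$, obtained from non-expansiveness of the binary mask, $L$-smoothness, and Assumption~\ref{ass:perturbation}, and then finish with elementary norm algebra. For $c_{\text{align}}$ your polarization route is a minor variant of the paper's (which writes $\lin{\aa,\bb}=\norm{\bb}^2+\lin{\bb,\aa-\bb}$ and applies Cauchy--Schwarz, then feeds the $c_{\text{sim}}$ bound back in); your version in fact yields $c_{\text{align}}<2$, comfortably below the stated $\sqrt{10}$.

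The one place your bookkeeping will not close is the constant for $c_{\text{sim}}$. In the nontrivial case $\norm{\aa}\ge\norm{\bb}$ you have $m=\norm{\aa}$, so the core estimate is $\norm{\aa-\bb}<\tfrac12\norm{\aa}$; plugging this into $\norm{\aa}\le\norm{\bb}+\norm{\aa-\bb}$ gives only $\norm{\aa}<2\norm{\bb}$, i.e.\ $c_{\text{sim}}<2$, not $\sqrt{10}/2\approx 1.58$. The paper reaches $\sqrt{10}/2$ via $\norm{\aa}^2\le 2\norm{\aa-\bb}^2+2\norm{\bb}^2$ together with $L\norm{\delta\xx_t}<\tfrac12\norm{\bb}$ rather than $\tfrac12 m$; but Assumption~\ref{ass:perturbation} only delivers the former when $\norm{\bb}\ge\norm{\aa}$, in which case the ratio is $\le 1$ anyway. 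In other words, the gap is in the target constant, not in your argument: under Assumptions~\ref{ass:lsmooth} and~\ref{ass:perturbation} alone the achievable bound is $c_{\text{sim}}<2$ (take $f(x)=\tfrac{L}{2}x^2$, $\pp_t=1$, $x_t=1$, $\delta x_t=-\tfrac12+\epsilon$ to see it is tight), and your triangle-inequality step already attains it.
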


We postpone the proof of the above lemma to the Appendix~\ref{app:proof-bounded-perturbation-assumption}. Note that the above lemma guarantees $q \leq 10$ for the extragradient method yielding convergence bounds with the same complexity as previously known results \cite{xu2019extragradient}. Assumption~\ref{ass:perturbation} can also be useful in other settings, especially because it can be verified during training. Therefore, it is possible to design training algorithms which determine the perturbation and the mask based on whether this assumption holds. We now provide one possible example in the settings of model parallel training.

\paragraph{Model-Parallel Training}
One of the recently introduced methods for model-parallel distributed training of a network across several workers is independent subnet training \cite{yuan2019distributed}. In this method, neurons in each layer of the network are randomly partitioned into $k$ disjoint sets (equal to the number of workers), partitioning the whole network into $k$ disjoint parts. Each worker trains one part of the network by independently performing SGD steps on its personal core part for a limited number of steps. Next, the new weights are communicated, and the network is again randomly re-partitioned, each worker receiving a new part of the network. From the perspective of a single worker, the training scheme matches the use of a fixed dropout mask for several steps, which exactly fits our Algorithm~\ref{alg:partialsgd}. %
Moreover, because of the disjoint partitioning, the result is identical if the worker steps were interleaved instead of being run in parallel.
 More specifically, if $\pp_{i, k}$ corresponds with the mask for worker $i$ at step $k$, the sequence $\pp_t$ can be set to $\pp_{1, 1}, \pp_{2, 1}, \ldots, \pp_{k, 1}, \pp_{1, 2}, \ldots$. Hence, Theorem~\ref{thm:main} can be used to analyze the convergence of this scheme. It might be possible to evaluate the satisfaction of the above criteria using theoretical arguments or practical measurements. However, here we want to point toward another application of these criteria. In particular, we note that currently the algorithm requires tuning the number of local steps on each worker. Another approach is to verify Assumption~\ref{ass:perturbation} and change the partitioning whenever it is violated. Note that this is possible with minimal communication (norm of the weights) between workers. Even the communication might not be necessary since the norm of the weights does not significantly change in a single partitioning. We leave the practical investigation into this as a future work. %

\paragraph{Dropout} In the case of dropout, each element of the mask $\pp_t$  (or each group of elements corresponding to the weights connected to a specific neuron) is i.i.d.\ chosen from a Bernoulli distribution with mean~$\mu$.  In this case, the expected value of $\frac{\norm{\nabla f(\xx_t)}^2}{\norm{\pp_t \odot \nabla f(\xx_t)}^2}$ is $\frac{1}{\mu}$ which means Criterion~\ref{crit:norm-masking} is on average satisfied. We use a practical approach to ensure the other criteria holds. Note that it might also be possible to show the other criteria are also satisfied in theory by making the right assumptions which we leave as a future work. We measure the values upper bounded in each criterion throughout training of a ResNet-18 on CIFAR10 with and without Dropout. The results are plotted in Figure~\ref{fig:dropout-measurement}. It can be seen that when applying dropout, both Criterion~\ref{crit:norm-perturbation} and~\ref{crit:gradient-alignment} hold in practice. We now provide possible explanations as to why this happens. 
Assuming a positive correlation between gradient's norm and convergence, it is intuitive that replacing an optimized parameter with a random value (such as zero) would increase gradient norm. As a simple example, consider a 1-dimensional quadratic function $(x - a)^2$ where $a$ is a large positive number. After some optimization steps, $x$ will be close to $a$ and replacing it with zero would increase the gradient's norm. While this is not a formal proof, it gives an intuition as to why Criterion~\ref{crit:norm-perturbation} would be satisfied.

However, it is not directly clear why running SGD with dropout  results in the last criterion to be satisfied. In particular, Figure~\ref{fig:dropout-measurement-vs-nodropout} shows that this criterion is not satisfied when running SGD without dropout. We hypothesize that this phenomena is related to the findings of previous work \cite{nichol2018first,dandi_implicit_2021} which shows that SGD imposes an implicit regularization to increase the dot product of consecutive mini-batches. In particular, since the proof in \cite{nichol2018first} allows for a different loss functions per mini-batch, the result is applicable to our case as well and means that SGD with dropout promotes alignment between gradients of random sub-networks. Intuitively, if the gradients of any pair of sub-networks are aligned with each other, it can be expected that they are also aligned with the full network, for example due to existence of sub-networks obtained as a result of pruning that are able to produce similar output as the full network.

\begin{figure}[t]
	\centering
	\includegraphics[width=.8\linewidth]{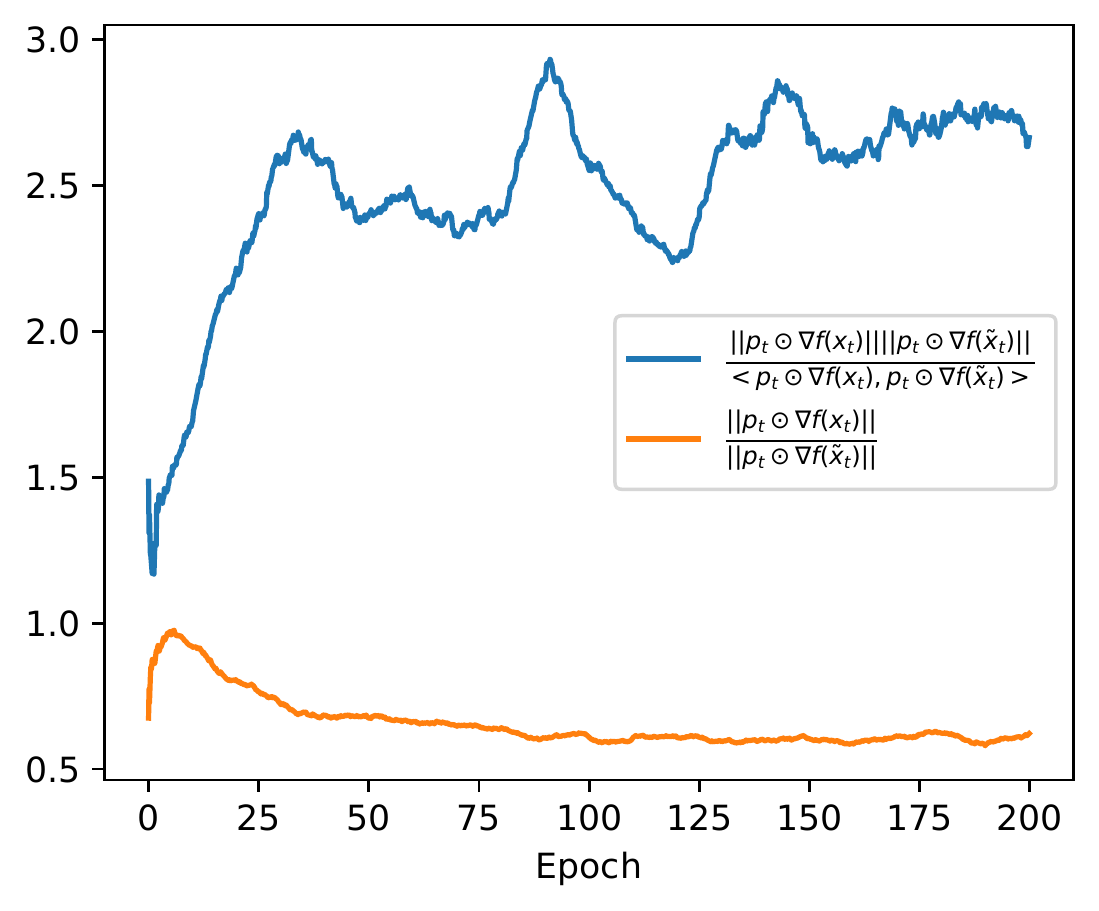}
	\caption{ The values in Criterion~\ref{crit:norm-perturbation} and~\ref{crit:gradient-alignment} measured while training a ResNet-18 on CIFAR10 with  dropout. It can be seen that these criteria are satisfied in practice.}
	\label{fig:dropout-measurement}\vspace{-2em}
\end{figure}

\begin{figure}[t]
	\centering
	\includegraphics[width=.8\linewidth]{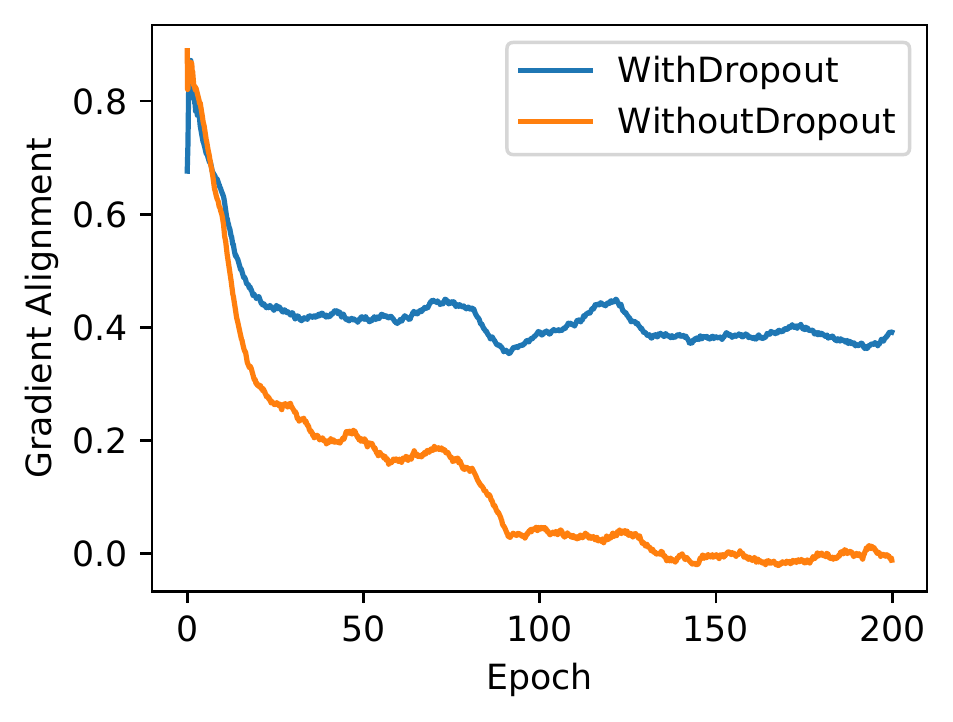}
	\caption{ The value $\frac{\lin{\pp_t \odot \nabla f(\xx_t), \pp_t \odot \nabla f(\tilde{\xx}_t)}}{\norm{\pp_t \odot \nabla f(\xx_t)}\norm{\pp_t \odot\nabla f(\tilde{\xx}_t)}}$, where $\tilde{\xx_t}$ is obtained by applying a random Dropout mask (masking neurons instead of weights), measured while training a ResNet-18 on CIFAR10 with and without dropout. Observably, applying dropout improves gradient alignment.}
	\label{fig:dropout-measurement-vs-nodropout}\vspace{-2em}
\end{figure}

\subsection{Alternating Training Scheme}
\label{sec:thmats}

We now showcase how it is possible to use the criteria we introduced as guidelines for designing new methods. In particular, we design an algorithm for training a given network $\Msuper$, which we refer to as super-network, jointly with one of its sub-networks $\Mcore$, which we refer to as the core network. First, note that any algorithm that trains the full network on some steps and the sub-network on other steps fits the partial SGD template since the parameters of $\Mcore$ can be simply extracted by applying a binary mask~$\1_{\Mcore}$ over~$\xx$, i.e.\ $\1_{\Mcore} \odot \xx$. In order to choose between different switching patterns we compare them in terms of our criteria. In particular, relying on the implicit regularization of SGD that promotes alignment between consecutive mini-batches that we discussed in analyzing dropout, the maximum alignment should be obtained when consecutively switching between training $\Mcore$ and $\Msuper$. This yields Algorithm~\ref{alg:scheme} which we refer to as Alternating Training Scheme (ATS). Note that here we focus on the case where $\Mcore$ is pre-determined. However, if this was not the case, Criterion~\ref{crit:norm-masking} would point toward selecting the sub-network containing largest elements of the gradient vector.

\begin{figure}
	\begin{algorithm}[H]
		\caption{\textsc{Alternating Training %
		}} 
		\label{alg:scheme}
		\resizebox{\linewidth}{!}{
			\begin{minipage}{1.1\linewidth}
				\begin{algorithmic}[1]
					\For{$t=0,\dots,T$}
					\If {$t$ is even}
					\State $\pp_t \gets \1_{\cM_{\text{\small super}}}$ \Comment all-one vector
					\Else
					\State $\pp_t \gets \1_{\cM_{\text{\small core}}}$ \Comment mask of core network
					\EndIf
					\State Take stochastic gradient $\gg_t$ of $\nabla f(\pp_t \odot \xx_t)$
					\State $\xx_{t + 1} \gets \xx_t - \gamma_t \pp_t \odot \gg_t$ 
					\EndFor
				\end{algorithmic}
		\end{minipage}}%
	\end{algorithm}%
	\vspace{-2em}
\end{figure}

ATS is clearly a special case of partial SGD using the same mask sequence $\1_{\Msuper}, \1_{\Mcore}, \1_{\Msuper}, \ldots$ as $\pp_t$ and $\delta \xx_t = -\xx_t$ (thus zeroing weights not part of the core network).  In order to evaluate ATS in terms of our criteria, we follow a similar approach as for Dropout and make the following observations based on training a Transformer jointly with its subnetwork with half the full width:

\paragraph{Large gradient overlap} Let $r$ be the ratio of parameters in the subnetwork to the number of parameters in the full network. For example, in our experiment $r = 0.5$. Given the random initialization the expected value for $\frac{\norm{\nabla f(\xx_t)}^2}{\norm{\pp_t \odot \nabla f(\xx_t}^2}$ is $\frac{1}{r}$ when $t  = 0$.  Figure~\ref{fig:full-masked-grad-ratio} shows that when training using ATS, this quantity remains low and does not increase during training. In fact, it can be seen that when using ATS, the overlap value is significantly improved compared to standard training.%
\paragraph{Norm similarity} Figure~\ref{fig:masked-small-grad-ratio} shows that even when running SGD, the norm of the gradient does not decrease due to applying perturbation. In fact, applying ATS results in less increase in gradient norm as a result of perturbation. However, whether running SGD or ATS the ratio is upper bounded for example by 1. 
\paragraph {Gradient alignment} Finally, we measure $ \frac{\norm{\pp_t \odot \nabla f(\xx_t)}\norm{\pp_t \odot\nabla f(\tilde{\xx}_t)}}{\lin{\pp_t \odot \nabla f(\xx_t), \pp_t \odot \nabla f(\tilde{\xx}_t)}}$ in practice and demonstrate that, as expected, using ATS lowers this value (thus improves alignment) fulfilling Criterion~\ref{crit:gradient-alignment}. The result is plotted Figure~\ref{fig:dot-masked-small}.  %

The convergence bound obtained for ATS from our observations is within a small constant of vanilla SGD. On the other hand, in Section~\ref{sec:experiments}, we observe that the same number of iterations as in standard training is typically sufficient in practice.

\begin{figure*}[t]
	\centering
	\begin{subfigure}[c]{.3\linewidth}
		\includegraphics[width=\linewidth]{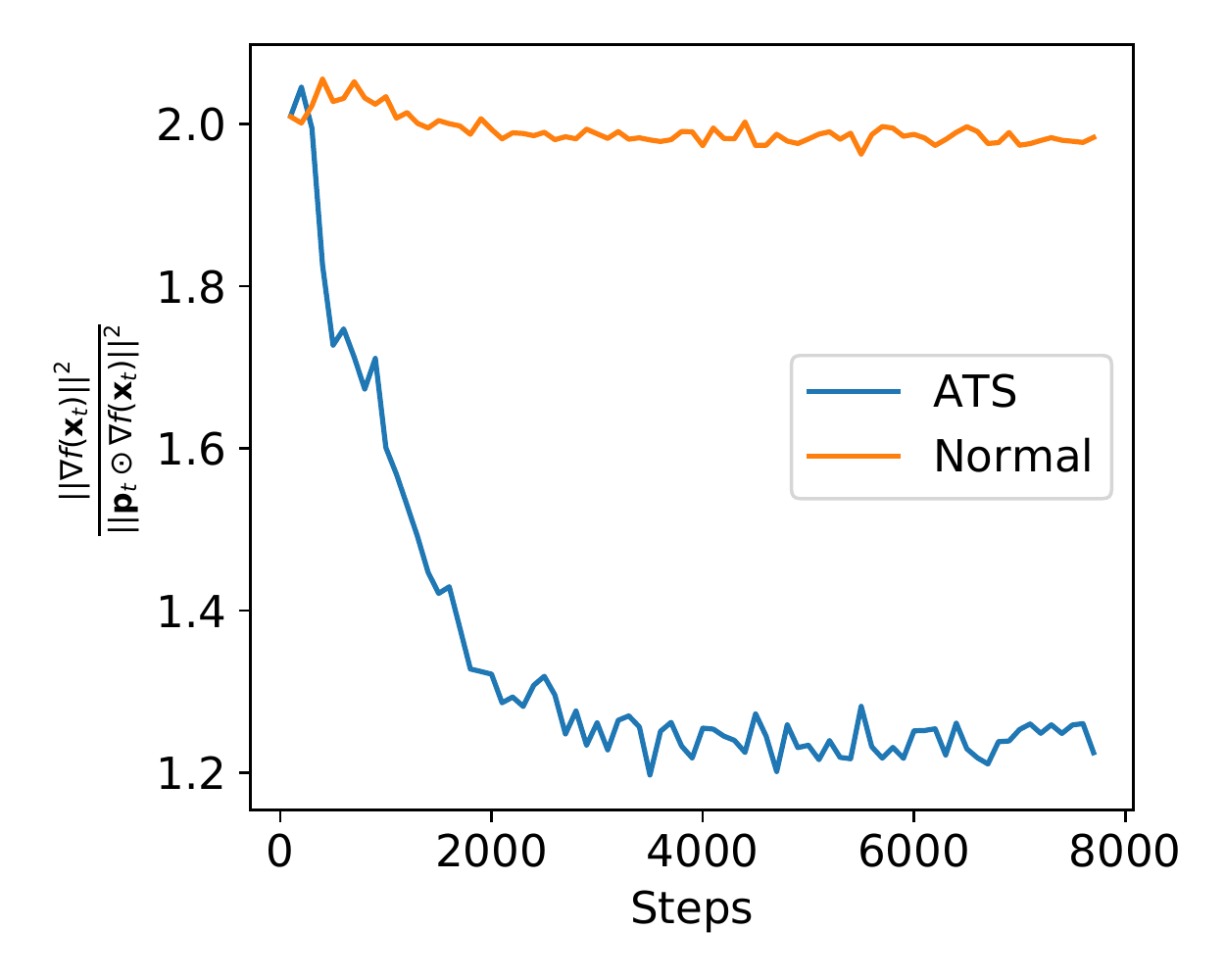}
		\caption{ $\frac{\norm{\nabla f(\xx_t)}^2}{\norm{\pp_t \odot \nabla f(\xx_t}^2}$}
		\label{fig:full-masked-grad-ratio}
	\end{subfigure}
	\begin{subfigure}[c]{.3\linewidth}
		\includegraphics[width=\linewidth]{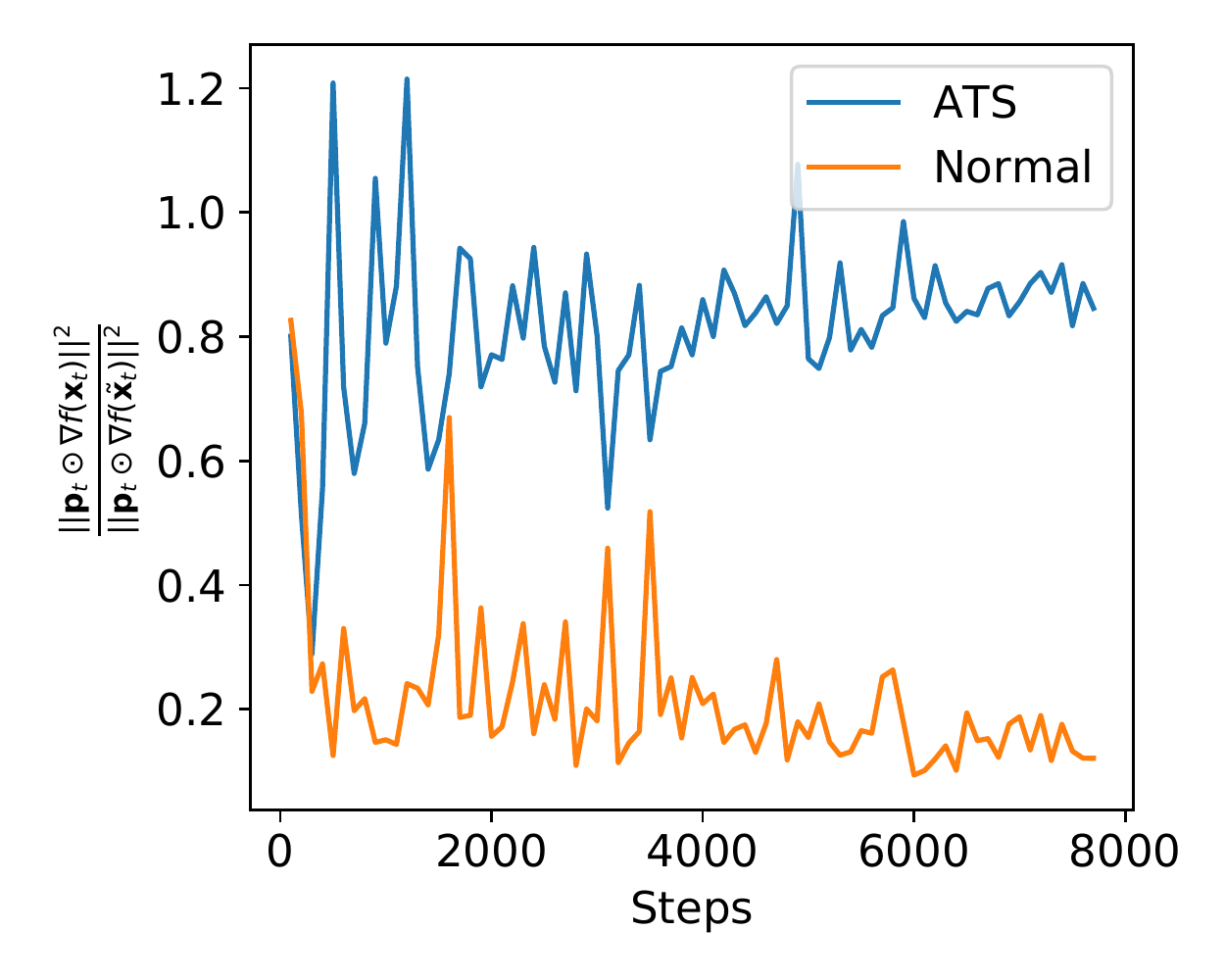}
		\caption{$\frac{\norm{\pp_t \odot \nabla f(\xx_t)}^2}{\norm{\pp_t \odot\nabla f(\tilde{\xx}_t)}^2}$}
		\label{fig:masked-small-grad-ratio}
	\end{subfigure}
	\begin{subfigure}[c]{.3\linewidth}
		\includegraphics[width=\linewidth]{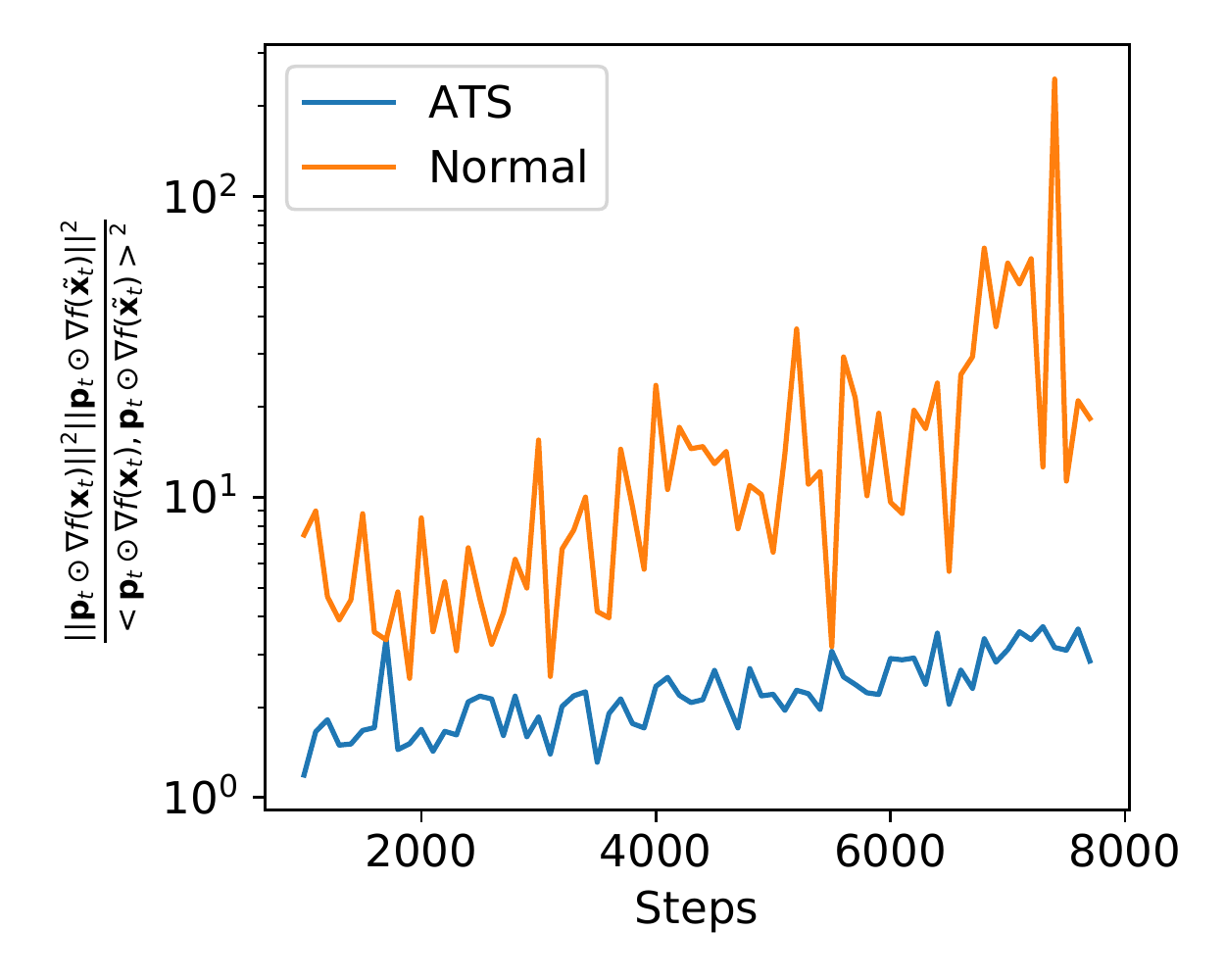}
		\caption{$\frac{\norm{\pp_t \odot \nabla f(\xx_t)}^2\norm{\pp_t \odot\nabla f(\tilde{\xx}_t)}^2}{\lin{\pp_t \odot \nabla f(\xx_t), \pp_t \odot \nabla f(\tilde{\xx}_t)}^2}$}
		\label{fig:dot-masked-small}
	\end{subfigure}
	\caption{Measurement of quantities affecting convergence rate of Theorem~\ref{thm:main} during standard training of a Transformer vs.\ joint training together with its subnetwork with half width using ATS. In all plots, a mini-batch gradient %
	is used to estimate the mentioned quantities, instead of the full gradient.}
\label{fig:ats-measurement}
\end{figure*}

Applying ATS to small-width subnetworks recovers the setting of slimmable nets \cite{yu2018slimmable, yu_universally_2019}. While the original papers apply gradient updates of narrow and wide widths at the same timestep, our scheme unrolls this by updating the small and large parts sequentially. Therefore, the convergence result for ATS also informs the sequential version of slimmable nets, and proves that the $q_t$ 
parameter characterizes the successful joint training. As we discussed earlier,  the sequential version benefits from implicit SGD regularization to keep the gradients of the sub-network and full network aligned.

We also point out that while at first glance the algorithm seems tailored for training a network jointly with one of its sub-networks, the same method can be used to train a pair of arbitrary networks, by using one as the core network, while building the super network $\Msuper$ to be the combination of the two. %
There are various ways to build $\Msuper$. In its simplest form, this can be done by creating a super-network that has a copy of each network and outputs the sum of their outputs. Wide-and-deep networks are an example of using this simple combination. This can be seen by letting $\Mcore$ correspond to the deep network while the wide network acts as the extension network. Using ATS in this setting can allow the wide part to become an extension to the deep part which can focus on solving cases where feature interactions are necessary, which was the intuition behind the design of these networks \cite{cheng2016wide}. Furthermore, the deep part can be used as a standalone network when necessary. 

A more sophisticated combination is summing the output of both models in all hidden layers, instead of just in the output layer. This is the combination scheme we use in our experiments for training a low-rank model. These cases are illustrated in Figure~\ref{fig:super-network-illustration}.

While we have so far focused on training two networks, the scheme can be extended to train several more networks as a hierarchy, to allow more flexibility and fine-grained control over the amount of resource usage. 

\section{Experiments}
\label{sec:experiments}

We consider the application of alternating training scheme in two use cases. In Section~\ref{sec:alternating-lowrank}, we show the effectiveness of ATS by training a standard Transformer together with a Transformer with a low-rank architecture.
 We show that the low-rank model trained with ATS is able to match or in some cases even outperform training the low-rank variant separately. At the same time the full-rank obtains a comparable performance while the total training time is less than training both models separately.  
In Section~\ref{sec:alternating-speed},
	we show that ATS can also be used to train a Transformer together with its small-width sub-network with reduced width and observe similar results to the low-rank training. %
We trained our models using Adam and trained for 22 epochs on WMT17 \cite{bojar2017findings} and for 30 epochs on IWSLT14 \cite{cettolo2014report}. We report a detailed list of hyper-parameters in Appendix~\ref{app:hyperparameters}. 

\subsection{Low-Rank Training}
\label{sec:alternating-lowrank}
In this section we demonstrate how our alternating training scheme can be used to train a low-rank model. In particular, we build a super-model composed of a low-rank as well as a full-rank Transformer by replacing the linear transformations (e.g.\ fully connected layers) in the feed-forward modules and key/query projection in the attention layers of a standard Transformer, with low-rank fully connected (LFC) layer:
\begin{align*}
	LFC(X) := (VU + W) X
\end{align*}
where $U \in \R^{k \times f_{in}}$, $V \in \R^{f_{out} \times k}$, and $W \in \R^{f_{out} \times f_{in}}$ are the parameters. We set $k = r \min(f_{in}, f_{out})$
for all layers where $r$ is the rank reduction ratio. The low-rank variant is the sub-network which only contains~$U$ and $V$ parameters of each LFC layer as well as all other parameters in the super-network that are not in a LFC layer. Note that in this case, the super-network is equivalent to a full-rank variant and can be converted to a standard Transformer after the training. %

We compared the performance of the trained low-rank and full-rank models  with the performance of the same model trained separately using the standard optimizer on IWSLT14 dataset in Table~\ref{tab:transformer-iwslt14-lowrank}.  Observably, the low-rank model trained in our scheme has a superior performance for lower values of $r$ while it matches the performance of standard training for larger $r$. At the same time, the full-rank model has a much higher performance near that of the standard training of the same model, showing the success of ATS in joint training. %
We repeated the same experiment on WMT17 and obtained similar results which we reported in Table~\ref{tab:transformer-wmt-lowrank}.%

\begin{table}[t]
	\caption{BLEU scores of Transformer trained jointly with its \textbf{low-rank variant} using ATS on the IWSLT14 (top) and WMT17 (bottom) machine translation datasets respectively. The sub-network corresponds to the low-rank variant with rank reduced by factor $r$. The baseline (full network with standard training) shows the performance of a standard Transformer. In other rows, the full network corresponds to the super-network which can be converted to a standard Transformer.\vspace{-2mm}}
	\begin{subtable}{\linewidth}
		\centering
		\caption{IWSLT14 dataset.}
		\label{tab:transformer-iwslt14-lowrank}
		\resizebox{\linewidth}{!}{
			\begin{tabular}{lcc}
				\toprule
				Model & Sub-Network  & Full Network\\\midrule
				Standard Training & - & 32.48 (0.32)\\\midrule
				Standard Training ($r = 0.25$)  & 28.96 (0.04) & - \\%
				Alternating Training ($r = 0.25)$ & {29.15 (0.63)} & 31.34 (0.36)\\\midrule
				Standard Training ($r = 0.125$)& 27.87 (0.11) & - \\%
				Alternating Training ($r = 0.125$) & {28.20 (0.48)}  & 31.62 (0.13)	\\\midrule
				Standard Training ($r = 0.03125$)& 17.95 (3.65) & - \\%
				Alternating Training ($r = 0.03125$) & \textbf{24.03 (0.77)} & 31.20 (0.58) \\\bottomrule
			\end{tabular}
		}
	\end{subtable}
	\begin{subtable}{\linewidth}
		\centering
		\vspace{.5em}
		\caption{WMT17 dataset. %
		}
		\label{tab:transformer-wmt-lowrank}
		\resizebox{\linewidth}{!}{
			\begin{tabular}{lcc}
				\toprule
				Model & Sub-Network  & Full Network\\\midrule
				Standard Training & - & 26.41 (0.14) \\\midrule
				Standard Training ($r = 0.03125$)& 23.20 (0.09) & - \\%
				Alternating Training ($r = 0.03125$) & 23.18 (0.12) & 25.50 (0.22)
				\\\bottomrule
			\end{tabular}
		}
		\vspace{-2em}
	\end{subtable}
\end{table}

\subsection{Small-Width Training}
\label{sec:alternating-speed}
In order to show that our model can also be used for training a network together with its sub-network, we train a Transformer with its small-width variant. 
As in the low-rank settings, ATS obtains core networks with similar or even superior performance to solo training as well as super networks with comparable performance. 
We report our result in Appendix~\ref{app:small-width} and also compare per-step computation cost of our method with slimmable training and standard training in FLOPs. It can be seen that ATS is significantly more efficient than slimmable training and even standard training of the super model. %

\section{Future Work}
 Our theoretical framework is widely applicable to a variety of existing and novel training scenarios. Hence, it can be used to analyze other partial training schemes and gain insights useful to improve such schemes. For example, a future direction would be using the analysis done in this work on independent subnet training to allow dynamic switching between local training and global coordination automatically. %
We have also used theoretical insights of our results to design a joint training method for two arbitrary neural networks. While we applied our method to train Transformers on translation tasks, applications in other areas or for other architectures is grounds for future work. Finally, other training schemes can be designed relying on the proposed set of criteria identified in our work.

\section{Conclusion}
In this work, we developed a theoretical framework to analyze convergence of SGD variants that apply parameter perturbation and gradient masking, and showed that our settings cover a large number of scenarios involving partial training. Moreover, we presented a new simultaneous training scheme for arbitrary network pairs, and showed
the efficacy of our scheme for training a Transformer together with its low-rank variant, and alternatively its subnetwork of reduced width.

\clearpage

{\small
		\section*{Acknowledgement}
	This work was supported by Innosuisse Project Privately And by SNSF grant 200020\_200342.
	
	We thank Lie He and Anastasia Koloskova for their help in proofreading of the manuscript.
\bibliographystyle{plainnat}
\bibliography{references}
}

\clearpage
\appendix
\thispagestyle{empty}
\onecolumn \makesupplementtitle

\section{Proof of Theorem~\ref{thm:main}}
\label{app:theorem-proof}

The following proof is inspired by the proof used for BiasedSGD \cite{ajalloeian2020analysis} and adapted for Algorithm~\ref{alg:partialsgd}.

First, let us recall the following remark about the properties of $L$-smooth functions.

\begin{remark}
	$L$-smooth functions satisfy the following \cite[Lemma 1.2.3]{Nesterov2004:book}: 
	\begin{align}
		f(\yy) \leq  f(\xx) + \lin{\nabla f(\xx), \yy - \xx} + \frac{L}{2} \norm{\xx - \yy}^2\,, \forall \xx,\yy \in \R^{d}\,. \label{eq:lsmooth}
	\end{align}
\end{remark}

We now prove the following lemma:

\begin{lemma}
	\label{lemma:main}
	If the assumptions of Theorem~\ref{thm:main} hold:
	\begin{equation}
		\label{eq:smallnc}
		\begin{split}
			\E_{\bxi(\xx_t)} f(\xx_{t + 1}) \leq &   f(\xx_t)  - \frac{\gamma_{base}}{2} \alpha_t^2\norm{\pp_t \odot \nabla f(\tilde{\xx}_t)}^2  + \frac{ \gamma_{base}^2L}{2} \sigma^2 \, ,
		\end{split}
	\end{equation}
\end{lemma}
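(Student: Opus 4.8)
The plan is to combine the descent lemma for $L$-smooth functions with the unbiasedness of the stochastic gradient oracle, and then to absorb the resulting cross term using the precise definition of the stepsize scaling $\alpha_t$.

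First I would substitute the update $\xx_{t+1} = \xx_t - \gamma_t \pp_t \odot \gg_t$, with $\gamma_t = \alpha_t\gamma_{base}$ and $\gg_t = \nabla f(\tilde{\xx}_t) + \bxi(\tilde{\xx}_t)$, into the quadratic upper bound \eqref{eq:lsmooth}, giving
\begin{equation*}
	f(\xx_{t+1}) \leq f(\xx_t) - \gamma_t \lin{\nabla f(\xx_t), \pp_t \odot \gg_t} + \tfrac{L\gamma_t^2}{2}\norm{\pp_t \odot \gg_t}^2 \,.
\end{equation*}
Taking the conditional expectation $\E_{\bxi(\xx_t)}$ and using $\E \gg_t = \nabla f(\tilde{\xx}_t)$, the linear term becomes $-\gamma_t\lin{\nabla f(\xx_t), \pp_t \odot \nabla f(\tilde{\xx}_t)}$; here I would use that a binary mask is idempotent ($\pp_t \odot \pp_t = \pp_t$), so $\lin{\nabla f(\xx_t), \pp_t \odot \vv} = \lin{\pp_t \odot \nabla f(\xx_t), \pp_t \odot \vv}$. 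For the quadratic term I would expand $\E\norm{\pp_t \odot \gg_t}^2 = \norm{\pp_t \odot \nabla f(\tilde{\xx}_t)}^2 + \E\norm{\pp_t \odot \bxi(\tilde{\xx}_t)}^2$ (the cross term vanishes by zero-mean noise) and invoke Assumption~\ref{ass:noise} \emph{at the perturbed point} $\tilde{\xx}_t$ to bound the noise term by $M\norm{\pp_t \odot \nabla f(\tilde{\xx}_t)}^2 + \sigma^2$, so $\E\norm{\pp_t \odot \gg_t}^2 \leq (M+1)\norm{\pp_t \odot \nabla f(\tilde{\xx}_t)}^2 + \sigma^2$.

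Next I would insert $\gamma_t = \alpha_t\gamma_{base}$ and exploit the two cases of $\gamma_{base} = \min\{\frac{1}{L(M+1)},\frac{\epsilon}{L\sigma^2}\}$: since $\alpha_t \le 1$ the noise contribution $\tfrac{L\alpha_t^2\gamma_{base}^2}{2}\sigma^2$ is at most $\tfrac{L\gamma_{base}^2}{2}\sigma^2$, and since $\gamma_{base} \le \frac{1}{L(M+1)}$ one has $\tfrac{L\alpha_t^2\gamma_{base}^2}{2}(M+1) \le \tfrac{\alpha_t^2\gamma_{base}}{2}$, which tames the remaining gradient term. This leaves
\begin{equation*}
	\E_{\bxi(\xx_t)} f(\xx_{t+1}) \leq f(\xx_t) - \alpha_t\gamma_{base}\lin{\pp_t \odot \nabla f(\xx_t), \pp_t \odot \nabla f(\tilde{\xx}_t)} + \tfrac{\alpha_t^2\gamma_{base}}{2}\norm{\pp_t \odot \nabla f(\tilde{\xx}_t)}^2 + \tfrac{L\gamma_{base}^2}{2}\sigma^2 \,.
\end{equation*}

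The final and key step is to use the definition $\alpha_t = \min\{1, \lin{\pp_t \odot \nabla f(\xx_t), \pp_t \odot \nabla f(\tilde{\xx}_t)}/\norm{\pp_t \odot \nabla f(\tilde{\xx}_t)}^2\}$ to show that $\alpha_t\lin{\pp_t \odot \nabla f(\xx_t), \pp_t \odot \nabla f(\tilde{\xx}_t)} \geq \alpha_t^2\norm{\pp_t \odot \nabla f(\tilde{\xx}_t)}^2$, by a two-line case distinction on whether the minimum is attained at $1$ (then the left side dominates the right since the ratio $\ge 1$) or at the ratio (then equality holds, and this even goes through unchanged if the inner product happens to be negative). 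Substituting this bound, the cross term cancels half of $\tfrac{\alpha_t^2\gamma_{base}}{2}\norm{\pp_t \odot \nabla f(\tilde{\xx}_t)}^2$ and overcompensates the rest, yielding exactly $-\tfrac{\gamma_{base}}{2}\alpha_t^2\norm{\pp_t \odot \nabla f(\tilde{\xx}_t)}^2 + \tfrac{\gamma_{base}^2 L}{2}\sigma^2$, i.e.\ \eqref{eq:smallnc}. I expect the only real subtleties to be the bookkeeping in the $\alpha_t$ case analysis and remembering that the noise assumption must be applied at $\tilde{\xx}_t$, not at $\xx_t$; everything else is routine manipulation.
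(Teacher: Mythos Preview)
Your proposal is correct and follows essentially the same approach as the paper: smoothness lemma, unbiasedness, bias--variance split, Assumption~\ref{ass:noise} at $\tilde{\xx}_t$, then the key inequality $\alpha_t\lin{\pp_t\odot\nabla f(\xx_t),\pp_t\odot\nabla f(\tilde{\xx}_t)}\geq \alpha_t^2\norm{\pp_t\odot\nabla f(\tilde{\xx}_t)}^2$ together with $\gamma_{base}\leq\frac{1}{L(M+1)}$ and $\alpha_t\leq 1$. The only cosmetic difference is ordering: you bound the $(M{+}1)$ quadratic term via $\gamma_{base}$ first and then absorb the cross term, whereas the paper first replaces the inner product by $\alpha_t\norm{\pp_t\odot\nabla f(\tilde{\xx}_t)}^2$ and then factors out $\gamma_t\alpha_t\bigl(1-\tfrac{\gamma_t}{2\alpha_t}L(M{+}1)\bigr)\geq \tfrac{\gamma_t\alpha_t}{2}$.
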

\begin{proof}
	\begin{align*}
		\E_{\bxi(\xx_t)} f(\xx_{t + 1}) & \leq  f(\xx_t) - \gamma_t \lin{\nabla f(\xx_t), \E [\pp_t \odot \gg_t]} + \frac{\gamma_t^2 L}{2} \E \norm{\pp_t \odot \gg_t}^2\\
		& =  f(\xx_t) - \gamma_t \lin{\nabla f(\xx_t), \pp_t \odot \nabla f(\tilde{\xx}_t)}  + \frac{\gamma_t^2L}{2} (\E \norm{\pp_t \odot (\gg_t - \E \gg_t)}^2  + \norm{\E\gg_t}^2) \\
		& = f(\xx_t) - \gamma_t \lin{\nabla f(\xx_t), \pp_t \odot \nabla f(\tilde{\xx}_t)}  + \frac{\gamma_t^2L}{2} (\E \norm{\pp_t \odot \bxi(\tilde{\xx}_t)}^2 + \norm{\pp_t \odot \nabla f(\tilde{\xx}_t)}^2)\\
		&\leq f(\xx_t)- \gamma_t \lin{\nabla f(\xx_t), \pp_t \odot \nabla f(\tilde{\xx}_t)}  +\frac{\gamma_t^2 L}{2}(M+1)\norm{\pp_t \odot \nabla f(\tilde{\xx}_t)}^2 + \frac{\gamma_t^2L}{2} \sigma^2\\
		&\leq f(\xx_t) - \gamma_t\alpha_t(1 - \frac{\gamma_t}{2\alpha_t} L (M + 1))\norm{\pp_t \odot \nabla f(\tilde{\xx}_t)}^2  + \frac{\gamma_t^2L}{2}  \sigma^2\\
		&\leq f(\xx_t) -  \frac{\gamma_t}{2} \alpha_t\norm{\pp_t \odot \nabla f(\tilde{\xx}_t)}^2  + \frac{\gamma_t^2L}{2}  \sigma^2\\
		&\leq  f(\xx_t) -  \frac{\gamma_{base}}{2} \alpha_t^2\norm{\pp_t \odot \nabla f(\tilde{\xx}_t)}^2  + \frac{\gamma_{base}^2L}{2}  \sigma^2 \,.
	\end{align*}
Where in the last equation we used the facts that $\alpha_t \leq 1$ and $\gamma_t = \alpha_t \gamma_{base}$.	
\end{proof}
We can now prove Theorem~\ref{thm:main}:
\begin{proof}[Proof of Theorem~\ref{thm:main}]
	Recall that we use $\E_{\bxi}$ to refer to the expected over the sequence $\bxi(\xx_0), \ldots, \bxi(\xx_{T-1})$. Define $F_t := \E_{\bxi} f(\xx_t) - f(\xx^\star)$. By rearraging the result of Lemma~\ref{lemma:main} and taking the expectation over all elements in $\bxi$ (in addition to $\bxi(\xx_t)$) we get:
	\begin{align*}
		\frac{1}{2}\E_{\bxi} \alpha_t^2\norm{\pp_t \odot \nabla f(\tilde{\xx}_t)}^2 \leq & \frac{F_t - F_{t + 1}}{\gamma_{base}}  + \frac{ \gamma_{base} L}{2} \sigma^2 \,.
	\end{align*}
	Averaging over all $T$, we get a telescoping summation leading to:
	\begin{align*}
		\frac{1}{T} \sum_{t = 0}^{T - 1} \E_{\bxi} \alpha_t^2\norm{\pp_t \odot \nabla f(\tilde{\xx}_t)}^2 \leq \frac{2F_0}{T\gamma_{base}}  + \gamma_{base} L \sigma^2 \,.
	\end{align*}

	We now use $\gamma_{base}= \min\big\{\frac{1}{L(M+1)}, 
	\frac{\epsilon}{2L\sigma^2}
	\big\}$ to prove the first part of the theorem. Note that this means $\gamma_{base}L\sigma^2 \leq \frac{\epsilon}{2}$. Moreover, $T \geq 4 \left(\frac{\sigma^2}{\epsilon^2}  + \frac{(M + 1)}{\epsilon} \right) \cdot L F_0$ yields $T\gamma_{base} \geq \frac{4F_0}{\epsilon}$ which yields $\frac{2F_0}{T\gamma_{base}} \leq \frac{\epsilon}{2}$. Using these two bounds, proves the first part of the theorem.

	For proving the second part note that:
	\begin{align*}
		\E_{\bxi}\norm{\nabla f(\xx_t)}^2 & =  \frac{\norm{\nabla f(\xx_t)}^2}{\alpha_t^2\norm{\pp_t \odot \nabla f(\tilde{\xx}_t)}^2}  \alpha_t^2\norm{\pp_t \odot \nabla f(\tilde{\xx}_t)}^2\\ 
		& = q_t^2 \alpha_t^2\norm{\pp_t \odot \nabla f(\tilde{\xx}_t)}^2  \\
		& \leq  q^2 \E_{\bxi}\alpha_t^2\norm{\pp_t \odot \nabla f(\tilde{\xx}_t)}^2 \, .
	\end{align*}
	where we used $q \geq q_t$ for all $t \in [T]$ and any sequence of noises $\bxi$, which directly follows from the definition of $q$, to get the last inequality. We can now see that
	\begin{align*}
		  \frac{1}{T}\sum_{t = 0}^{T - 1} \E_{\bxi}\alpha_t^2\norm{\pp_t \odot \nabla f(\tilde{\xx}_t)}^2 \leq \frac{\epsilon}{q^2} \Longrightarrow	 \frac{1}{T}\sum_{t = 0}^{T - 1}\E_{\bxi}\norm{\nabla f(\xx_t)}^2  \leq q^2 \cdot \frac{1}{T}\sum_{t = 0}^{T - 1} \E_{\bxi}\alpha_t^2\norm{\pp_t \odot \nabla f(\tilde{\xx}_t)}^2 \leq \epsilon \,.
	\end{align*}	
	Using $\frac{\epsilon}{q^2}$ as the target threshold in the bound we obtained for the first part of the theorem gives the desired result.%
\end{proof}

\section{Proof  of Lemma~\ref{lemma:bounded-perturbation}}
\label{app:proof-bounded-perturbation-assumption}
\begin{proof}
	We will first prove the bound for $c_{\text{sim}}$. We apply the inequality $\norm{a + b}_2^2 \leq 2\norm{a}_2^2 + 2\norm{b}_2^2$ and get
	\begin{align*}
		\norm{\pp_t \odot \nabla f(\xx_t)}^2  &\leq  2\norm{\pp_t \odot (\nabla f(\xx_t) - \nabla f(\tilde{\xx}_t))}_2^2 + 2\norm{\pp_t \odot \nabla f(\tilde{\xx}_t)}_2^2\\
		& \leq 2\norm{\nabla f(\xx_t) - \nabla f(\tilde{\xx}_t)}_2^2 + 2\norm{\pp_t \odot \nabla f(\tilde{\xx}_t)}_2^2\\
		 & \stackrel{(\ref{eq:lsmoothgrad})}{\leq} 2L^2\norm{\delta\xx_t}^2 +2\norm{\pp_t \odot \nabla f(\tilde{\xx}_t)}^2 \\
		&\stackrel{(\ref{eq:perturbation})}{\leq}  (\frac{1}{2} + 2) \norm{\pp_t \odot \nabla f(\tilde{\xx}_t)}^2 \,.  
	\end{align*}
	Hence $c_{\text{sim}} \leq \frac{\sqrt{10}}{2}$. 
	To prove the bound for alignment note that:
	\begin{align*}
		\lin{\pp_t \odot \nabla f(\xx_t),  \pp_t \odot \nabla f(\tilde{\xx}_t)} &=  \norm{\pp_t \odot \nabla f(\tilde{\xx}_t)}^2 + \lin{\pp_t \odot \nabla f(\tilde{\xx}_t),  \pp_t \odot (\nabla f(\xx_t) -  \nabla f(\tilde{\xx}_t))} \\
		& =  \norm{\pp_t \odot \nabla f(\tilde{\xx}_t)}^2 + \lin{\pp_t \odot \nabla f(\tilde{\xx}_t),  \nabla f(\xx_t) - \nabla f(\tilde{\xx}_t)}\\
		& \geq \norm{\pp_t \odot \nabla f(\tilde{\xx}_t)}^2 - \norm{\pp_t \odot \nabla f(\tilde{\xx}_t)}\norm{\nabla f(\xx_t) - \nabla f(\tilde{\xx}_t)} \\
		&  \stackrel{(\ref{eq:lsmoothgrad})}{\geq} \norm{\pp_t \odot \nabla f(\tilde{\xx}_t)}^2 - L\norm{\pp_t \odot \nabla f(\tilde{\xx}_t)}\norm{\xx_t - \tilde{\xx}_t}\\
		& = (1 - L\frac{\norm{\delta\xx_t}}{\norm{\pp_t \odot \nabla f(\tilde{\xx}_t)}})\norm{\pp_t \odot \nabla f(\tilde{\xx}_t)}^2\\
		&\stackrel{(\ref{eq:perturbation})}{\geq}\frac{1}{2}\norm{\pp_t \odot \nabla f(\tilde{\xx}_t)}^2 \\
		&\geq\frac{1}{2} \cdot \frac{2}{\sqrt{10}}\norm{\pp_t \odot \nabla f(\tilde{\xx}_t)}\norm{\pp_t \odot \nabla f(\xx_t)}
		\, .
	\end{align*}
	Where in the last inequality we used the bound for $c_{\text{sim}}$ which in turn shows $c_{\text{align}} \leq \sqrt{10}$. 
\end{proof}

\section{Small-Width Training Results}
\label{app:small-width}
We obtain the small-width variant by reducing the width of the feed-forward modules and the key/query projections in attention layers by a factor $r$. In particular, we reduce the number of neurons in the hidden layer of the feed-forward modules as well as the key/query embedding dimension of the attention layers  by the factor $r$. In contrast to the low-rank setting where we use an additive combination of the full-rank and low-rank model, we use the standard Transformer as the super-network and train it jointly with its sub-network corresponding to the small-width variant.

\begin{table}[t]
	\caption{BLEU scores of Transformer trained jointly with its \textbf{small width sub-network} on IWSLT14 (left) and WMT17 (right) datasets respectively, with and without ATS. The sub-network width reduction factor is denoted by $r$.\vspace{-2mm}}
	\begin{subtable}{.49\textwidth}
		\centering
		\caption{IWSLT14 dataset. }
		\label{tab:transformer-iwslt14-layer-partition}
		\resizebox{\linewidth}{!}{
			\begin{tabular}{lcc}
				\toprule
				Model & Sub-Network  & Full Network\\\midrule
				Standard Training & - & 32.48 (0.32)\\\midrule
				Standard Training ($r = 0.25$)  & 31.29 (0.27) & - \\%
				Alternating Training ($r = 0.25)$ & {31.59 (0.18)} &31.95 (0.35)\\\midrule
				Standard Training ($r = 0.03125$)& 23.58 (0.58) & - \\%
				Alternating Training ($r = 0.03125$) & 23.84 (0.25) & 30.41(0.40) \\
				\bottomrule
				
			\end{tabular}
		}
	\end{subtable}
	\hfill
	\begin{subtable}{.49\textwidth}
		\centering
		\caption{WMT17 dataset.}
		\label{tab:transformer-wmt-layer-partition}
		\resizebox{\linewidth}{!}{
			\begin{tabular}{lcc}
				\toprule
				Model & Sub-Network  & Full Network\\\midrule
				Standard Training & - & 26.41(0.14) \\\midrule
				Standard Training ($r = 0.03125$)& {20.47 (0.34)} & - \\%
				Alternating Training ($r = 0.03125$) & 20.08 (0.40) & 25.46(0.17) \\\bottomrule
			\end{tabular}
		}
	\end{subtable}
\end{table}

Table~\ref{tab:transformer-iwslt14-layer-partition} includes the performance  of both the full-width and the small-width models trained with this scheme in terms of  the BLEU score on IWSLT14 dataset. Furthermore, the performances of the same models with standard training are also reported. It can be clearly seen that training with this scheme yields better small-width models. Additionally, while the full-width models have a lower performance when trained with ATS than when trained standardly, their performance are still comparable. We also perform the same experiment on WMT17 dataset and report the results in Table~\ref{tab:transformer-wmt-layer-partition} and similarly observe that using ATS, results in a small network matching the performance of standard training and a full model with comparable performance while the cost of training is less than training these two models separately. 

We would like to point out that in comparison to low-rank models, small-width models have a lower performance on the WMT17 dataset for small values of $r$ while the number of parameters remain comparable. In particular, standard training of a low-rank Transformer with $r = 0.015625$ (which has the same number of parameter as low-width Transformer with $r = 0.03125$) yields 22.10 BLEU score (vs 20.47 BLEU score). This encourages using low-rank models instead of low-width models.

For completeness, given the similarity of our method and slimmable training we compare the two by performing the small-width experiments with $r = 0.5$ on a ResNet-18 over CIFAR10 datasets. The results are reported in Table~\ref{tab:resnet18-cifar10-ats-vs-slimmable}. We additionally reported an estimate of the number of FLOPs per epoch performed by each method. It can be seen that while the performance of both methods are similar, ATS is much more efficient.

\begin{table}[t]
	\centering
	\caption{Final model accuracy and per-epoch FLOPs count of ATS and slimmable training for ResNet-18 over CIFAR10 dataset.}
	\label{tab:resnet18-cifar10-ats-vs-slimmable}
	\begin{tabular}{lccc}
		\toprule
		Method & Model & Accuracy  & $10^{12}$ FLOPs\\\midrule
		\multirow{2}{*}{ATS} & Core & 93.94 & \multirow{2}{*}{31.3}  \\
		& Full  & 95.05 &  \\\midrule%
		\multirow{2}{*}{Slimmable} & Core  & 93.99 & \multirow{2}{*}{62.7} \\
		 & Full & 94.72 & \\\bottomrule
	\end{tabular}
	\end{table}

\section{Hyperparameters for Experiments}
\label{app:hyperparameters}
We train each experiment with IWSLT14 dataset on a single GTX Titan X  GPU and use a Tesla V100 GPU for training WMT17 experiments. For the implementation of our experiments we extend the implementation of transformer in FairSeq \cite{ott2019fairseq} and use its CLI (\verb!fairseq-train!) for training. The hyperparameters based on the dataset are reported in Table~\ref{tab:transformer-hyperparams}. 
\begin{table}
		\caption{Hyper-parameters used for training on different datasets}
		\label{tab:transformer-hyperparams}
		\centering
	\begin{tabular}{l|c|c}
		\toprule
		Parameter&\hspace{1cm}WMT17\hspace{1cm}~&IWSLT14\hfill\\\midrule
		adam\_betas&\multicolumn{2}{c}{(0.9, 0.98)}\\\midrule
		adam\_eps&\multicolumn{2}{c}{1e-09}\\\midrule
		alternate\_lr\_coef&\multicolumn{2}{c}{1.0}\\\midrule
		criterion&\multicolumn{2}{c}{label\_smoothed\_cross\_entropy}\\\midrule
		decoder\_attention\_heads&8&4\\\midrule
		decoder\_embed\_dim&\multicolumn{2}{c}{512}\\\midrule
		decoder\_ffn\_embed\_dim&2048&1024\\\midrule
		decoder\_layers&\multicolumn{2}{c}{6}\\\midrule
		dropout&0.1&0.3\\\midrule
		encoder\_attention\_heads&8&4\\\midrule
		encoder\_embed\_dim&\multicolumn{2}{c}{512}\\\midrule
		encoder\_ffn\_embed\_dim&2048&1024\\\midrule
		encoder\_layers&\multicolumn{2}{c}{6}\\\midrule
		label\_smoothing&\multicolumn{2}{c}{0.1}\\\midrule
		lr&0.0007&0.0005\\\midrule
		lr\_scheduler&\multicolumn{2}{c}{inverse\_sqrt}\\\midrule
		max\_epoch&22&30\\\midrule
		max\_tokens&18000&12000\\\midrule
		optimizer&\multicolumn{2}{c}{adam\_for\_partitioned\_model}\\\midrule
		partitioned\_linear\_module\_name&\multicolumn{2}{c}{PartitionedLinear or PartitionedRankLinear}\\\midrule
		share\_decoder\_input\_output\_embed&False&True\\\midrule
		small\_ratio&\multicolumn{2}{c}{0.03125}\\\midrule
		task&\multicolumn{2}{c}{translation\_with\_partitioned\_model}\\\midrule
		warmup\_init\_lr&1e-07&-1\\\midrule
		warmup\_updates&\multicolumn{2}{c}{4000}\\\midrule
		weight\_decay&0.0&0.0001\\\bottomrule
		\end{tabular}
	\end{table}
For training the sub-network alone we additionally set the parameter "max\_level" to 1. For training with alternating scheme we set the parameter ``alternate" to True. Training the normal Transformer (with or without ATS) for one epochs take approximately 7 minutes on IWSLT14 dataset and 100 minutes on WMT17 dataset. 

For CIFAR10, we train using SGD with momentum $0.9$ and weight decay $5 \times 10^{-4}$. Separate batch normalization layers were used for the core and the super network. For measuring alignment with and without dropout (Figure~\ref{fig:dropout-measurement}), batch normalization was disabled.

\end{document}